\let\epsilon\varepsilon
\date{\vspace{-5ex}}
\begin{document}

\newtheorem{fact}{Fact}
\newtheorem{definition}{Definition}
\newtheorem{assumption}{Assumption}
\newtheorem{lemma}{Lemma}
\newtheorem{proof}{Proof}
\renewcommand\theproof{\unskip}
\newtheorem{theorem}{Theorem}

\title{\LARGE \bf
Safe Mission Planning under Dynamical Uncertainties
\author{Yimeng Lu and Maryam Kamgarpour\thanks{This research was gratefully funded by the Swiss National Science Foundation, under the grant SNSF 200021\_172781 and the ETH Zurich Research Grant. The authors are with the Automatic Control Laboratory, Department of Information Technology and Electrical Engineering, ETH Z\"urich, Switzerland. E-mails: {\tt\footnotesize \{luyi, mkamgar\}@control.ee.ethz.ch}}}}

\maketitle

\begin{abstract}
This paper considers safe robot mission planning in uncertain dynamical environments. This problem arises in applications such as surveillance, emergency rescue, and autonomous driving. It is a challenging problem due to modeling and integrating dynamical uncertainties into a safe planning framework, and finding a solution in a computationally tractable way. In this work, we first develop a probabilistic model for dynamical uncertainties. Then, we provide a framework to generate a path that maximizes safety for complex missions by incorporating the uncertainty model. We also devise a Monte Carlo method to obtain a safe path efficiently. Finally, we evaluate the performance of our approach and compare it to potential alternatives in several case studies. 
\end{abstract}

\let\thefootnote\relax\footnote{This paper appears in International Conference of Robotics and Automation (ICRA 2020), Paris, France.}

\section{Introduction}

With the advances of robotics and artificial intelligence, autonomous robots are increasingly used in safety-critical applications. These applications, such as surveillance \cite{di2010autonomous}, emergency rescue \cite{wood2016automaton}, and autonomous driving \cite{cizelj2011probabilistically}, all require the robot to plan its path under uncertainties in the environment as well as in the model and motion of other robots. Path planning in these cases is challenging for the following reasons. First, it is required that the generated path has the highest probability of remaining safe while completing the task. Second, these uncertainties are in general dynamical and originate from different sources, e.g., uncertain locations of the obstacles \cite{zhou2018joint}, targets \cite{kamgarpour2017control}, and/or an evolving hazard \cite{wood2016automaton}. Modeling these uncertainties in a computationally tractable way and incorporating such models into a safe planning framework is a central challenge in applying robots in real-world scenarios. Last, complex missions usually involve multiple stages where several subtasks need to be fulfilled. The generated plan should handle both low-level point-to-point planning for each subtask and high-level decision making of execution order. In a search and rescue scenario, for example, the robot needs to find a path to visit all the critical locations under an evolving hazard while maximizing its safety. 
To address the challenges above, this paper aims at providing a framework for planning with safety guarantees in uncertain dynamical environments. 

In most applications of practical interest, uncertainties such as the locations of obstacles, pedestrians, vehicles, and hazardous areas, change with time. However, many of the current approaches for planning assume that these external factors are static in their initial planning stage \cite{likhachev2009probabilistic,ferguson2004pao,bajcsy2019efficient,herbert2017fastrack}. To mitigate this issue, reactive replanning methods such as D* \cite{stentz1997optimal}, D* Lite \cite{koenig2002d} and $\text{RRT}^{\text{X}}$ \cite{otte2016rrtx} adapt the initially generated path when a potential collision is detected during execution. Nonetheless, these methods are not safety maximizing as they do not account for uncertainty model during planning. Reactively adapting the path might not be globally optimal with respect to the safety of the whole mission. 
There also exist approaches to account for the uncertainty model such as \cite{gal2009efficient,phillips2011sipp}, where a known deterministic dynamic model of uncertainty is assumed for optimal safe planning. However, it is generally not realistic to assume perfect knowledge of the uncertainty dynamics in practical applications such as emergency rescue and autonomous driving.  
Chance-constrained RRT in \cite{luders2010chance} assumes a Gaussian model for dynamic obstacles and computes a probabilistically guaranteed feasible path offline. Its extension in \cite{aoude2013probabilistically} applies chance-constrained RRT to predicted future behaviors of the dynamical obstacles. Modeling the uncertainties as a partially observable Markov decision process (POMDP) is investigated in \cite{zhou2018joint} and \cite{brechtel2014probabilistic} in order to obtain a collision-free path at a crossroad with dynamic programming (DP). However, these approaches restrict uncertainties to Gaussian distributions \cite{zhou2018joint,luders2010chance,aoude2013probabilistically}, or generalize the uncertainties to Markov motion models with potentially intractable state-space \cite{brechtel2014probabilistic}. 

In scenarios with complex specifications, model-checking tools such as temporal logic verification can be used to handle both high-level decision making and low-level planning \cite{kress2011correct,lahijanian2015formal,wolff2012robust}. There also exist works in temporal logic planning with uncertainties. In \cite{maly2013iterative,guo2013revising,wongpiromsarn2010receding,sarid2013guaranteeing}, a preliminary plan is derived first and then revised reactively when the original plan is found infeasible during execution. However, similar to \cite{likhachev2009probabilistic,ferguson2004pao,bajcsy2019efficient,herbert2017fastrack}, these works do not maximize overall safety as they do not use a model of uncertainty dynamics. Incorporating uncertainty dynamics into planning frameworks increases the state space and thus, also the computational complexity \cite{wood2016automaton}. Other approaches include learning the uncertainty dynamics \cite{chen2012ltl,cizelj2011probabilistically} and incremental synthesis for temporal logic specifications \cite{ulusoy2014incremental}. However, their uncertainty models are for either specific scenarios or limited dimensions, which restricts the applicability for more general situations. 

As an example of addressing dynamical uncertainties in safety-critical applications of robotics, we consider an emergency rescue mission in an environment affected by a spreading hazard or contamination such as fire or toxic gas. We consider designing paths for the first responders or robots to maximize their safety. Using robots in safety-critical tasks is receiving increasing attention in robotics community \cite{kumar2004robot,kawatsuma2012emergency,casper2003human}. For the problem mentioned above, the uncertain hazard is dynamical, and its evolution cannot be precisely known. Developing and incorporating uncertainty models and finding computationally tractable approaches are challenging for these applications. 
Models and simulations of the hazardous environment are discussed in \cite{cheng2011dynamic,soubaras2008risk}, but these detailed models are mainly used for verification rather than stochastic control design. Recent works in emergency rescue are constrained to simple environmental settings. For instance, the works in \cite{beck2016online,baxter2007multi} discussed robot planning in earthquakes, but the models used are static. Robot-assisted evacuation is considered in \cite{schadschneider2009evacuation,shell2005insights,gorbil2011intelligent}, but dynamical uncertainty models are not incorporated. In \cite{wood2016automaton}, a similar search and rescue mission was considered for the case of firefighting in a building affected by fire, and a solution approach based on \cite{kamgarpour2017control} was introduced. While the formulation of \cite{kamgarpour2017control} allows for general environmental uncertainties, the computational approximations introduced there for tractability might result in poor safety performance.

Our contributions are summarized as follows. First, we represent the evolution of the dynamical uncertainties using a probabilistic model. Second, we develop an approximate dynamic programming algorithm to incorporate this model of uncertainties into the planning problem and solve it in a computationally tractable way; this is achieved by approximating the uncertainty evolution using a Monte Carlo method and providing corresponding safety guarantees. Third, we compare our approach with past works such as D* Lite in several case studies to show that our method provides safer path planning under dynamical uncertainties.

The rest of the paper is organized as follows. Section II formulates planning problems in uncertain dynamical environments. In Section III, we discuss our approach to handle dynamical uncertainties and solution to the planning problem in a tractable manner. In Section IV, we provide case studies for illustrating the approach and comparing it with potential alternatives. Section V concludes the paper and discusses future works. 

\section{System model and problem formulation}

We consider a general planning framework for an agent in an uncertain dynamical environment. We first define the models of the robot and uncertainties. Then, we define the planning problem for complex specifications, followed by the control synthesis of the problem.

\subsection{System model}

\subsubsection{Grid space and robot dynamics}

We use a discrete 2-dimensional grid $X=\{0,1,\ldots,m-1\} \times \{0,1,\ldots,n-1\}$ to indicate different locations in the environment. A grid cell can be free space or occupied by obstacles.
A map is a function $\mathcal{M}:X \to \{0,1\}$, whose value is 0 for free space and 1 for an obstacle such as a wall.
We use the Manhattan distance, which is the sum of the horizontal and vertical distances between cells, and assume that each grid cell has size $1\times 1$. 

We assume that an agent located at $x=(x^1,x^2) \in X$ can move in free space $X_f := \{ x \in X\ \vert \mathcal{M}(x)=0 \}$. Let $N(x)$ denote the subset of $X_f$ reachable from $x$ in one step using control input $u\in U:=\{\text{N,S,E,W,0}\}$, denoting the four directions of movement as well as staying in the current position. Given $x_t\in X_f$, the location at the next time step $x_{t+1}$ follows a distribution $x_{t+1} \sim \tau^X(\cdot \vert x_t,u_t)$, where $x_{t+1}\in N(x_t),\ u_t\in U$. 

\subsubsection{Uncertain dynamics of the environment}
To model arbitrary uncertainties with Markov dynamics, we define the parameterized stochastic set process below.

\begin{definition}[\textit{Stochastic set process}]
Let $Y := \{0,1\}^{m\times n}$ be the set of all binary matrices with the same dimension as the grid $X$. The transition probability between each $y\in Y$ is determined by a stochastic kernel $\tau^Y:Y\times Y\to \left[0,1\right]$. A stochastic set process is defined by the Markov process $y_{t+1}\sim \tau^Y(\cdot \vert y_t),~t\in\{0,1,\ldots\}$ with its initial condition $y_0$, and a set valued map $\gamma : Y \to X$. In our case, $\gamma(y)=\{x\in X | \left[y\right]_{x}=1\}$, where $\left[y\right]_{x}$ is the element at $x$ position of the binary matrix $y$.
\end{definition}

We highlight that the above stochastic set framework allows for any uncertainty with Markov dynamics. For example, parameterized geometric shapes such as ellipses or rectangles arise from uncertain vehicle motions \cite{summers2011stochastic} and probabilistic hazards. The Markov parameter $y$ can denote the center of the shape, and the map $\gamma$ can denote the volume. Note that our setup is more general as it allows for arbitrarily shaped obstacles or hazards. Consequently, it becomes more computationally challenging, and hence, it is essential to develop a method to address this problem. 
\begin{figure}
\minipage{0.48\textwidth}
    \includegraphics[width=1\linewidth]{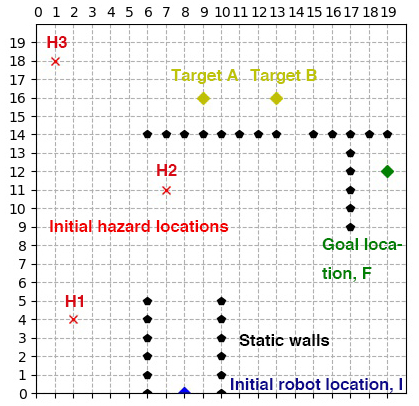}

    \caption{An example of a complex mission under uncertainties}
    \label{fig:small_example_problem}
\endminipage\hfill
\minipage{0.45\textwidth}
\includegraphics[width=1\linewidth]{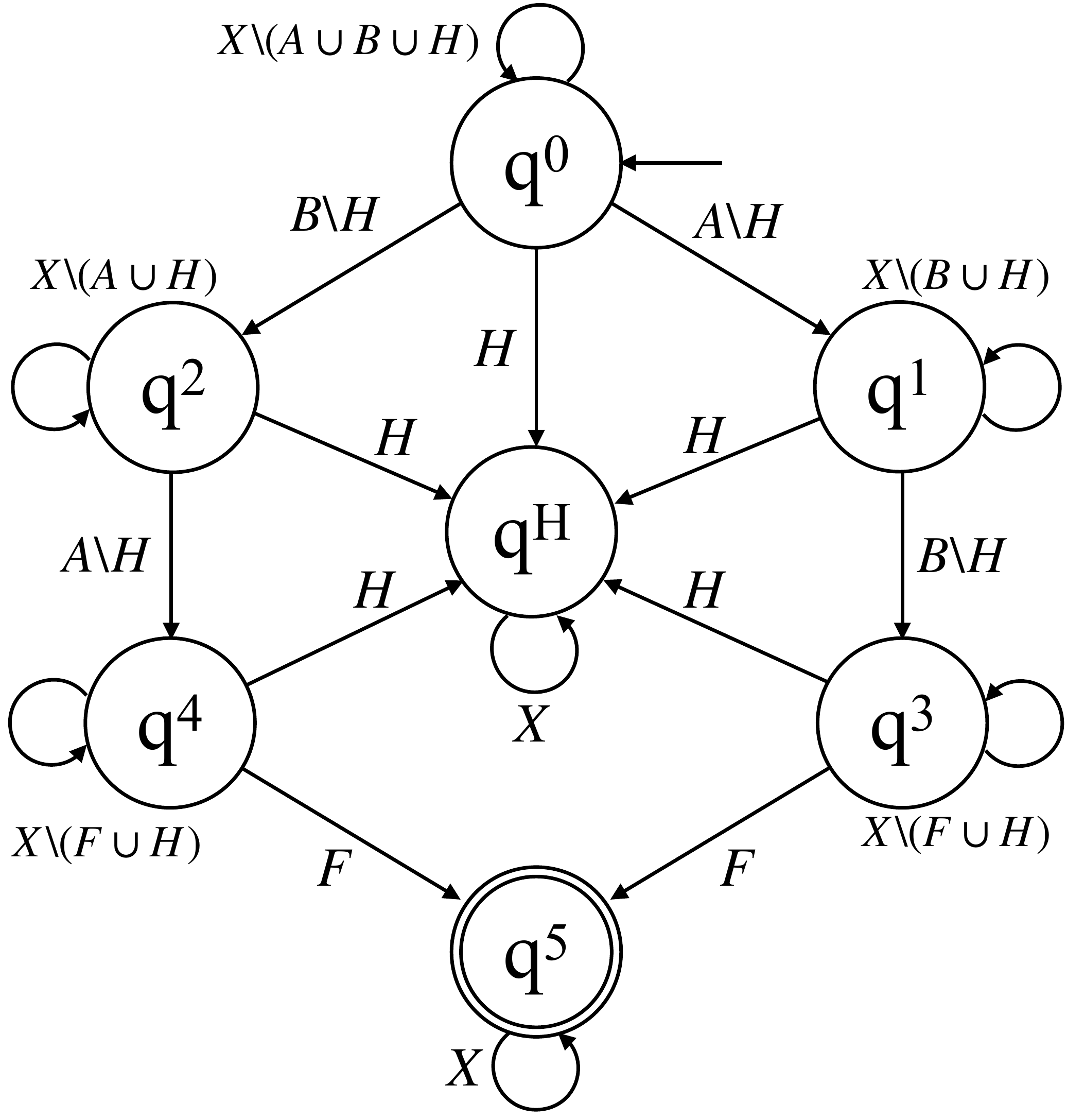}

\caption{Automaton example of the setup in Fig. 1}

\label{fig:Automaton_example}
\endminipage 
\end{figure}

\subsection{Problem formulation}
We use an evolving hazard as an example of dynamical uncertainties in the following discussions. 
The mission is to visit several targets and then reach the final goal location while keeping the robot out of the hazardous areas, which arises in search and rescue scenarios. For example, in Figure \ref{fig:small_example_problem}, the mission can be accomplished by starting from the initial location to visit targets A and B in an unspecified order, and exit at goal location F. And the hazard $H=H_1\cup H_2\cup H_3$ is shown in its initial condition. This complex mission can be described as a logical combination of visiting different targets while remaining safe and be modeled by a deterministic finite automaton (DFA) \cite{clarke2018model}. We form an overall system by combining the DFA and system dynamics. We present optimal planning as an automaton specification satisfaction problem. Note that due to the presence of the uncertainties, our objective becomes maximizing the probability of safely completing the desired task with the generated path. 
\subsubsection{Automaton specification satisfaction}
Let us formally define the finite state automaton to describe the considered specifications and the solution approach systematically. 
\begin{definition}
A finite-state automaton is a tuple $\mathcal{A}=(Q,Q^I,Q^F,\Sigma,\Delta)$, where $Q$ is a finite set of $n_q\in \mathbb{N}$ states, $Q^I\subset Q$ is a set of initial states, $Q^F\subset Q$ is a set of final (goal) states, $\Sigma$ is a finite alphabet, and $\Delta\subset Q\times \Sigma\times Q$ is a transition relation.

\end{definition}

Let the parameterized set $\gamma^{i}:Y\to X$ determine the transition to automaton state $i$. The automaton alphabet is defined as $\Sigma:=2^A$ and its transition relation is $\Delta :=\left\{(j, \sigma, i) | \sigma=\left\{K^{i}\right\}, \forall i\in Q\right\}$, where $K^{i}:= \{(x,y)\vert x\in \gamma^{i}(y)\}$ and $A:=\cup_{i=1}^{\vert Q\vert}K^{i}$ are defined in the product space $X\times Y$. For any final state $q\in Q^F$, we define $(q,X\times Y,q)\in \Delta$ making each final automaton state absorbing. This ensures once the final state is reached, the specification is satisfied as will be discussed below. We denote the hazard automaton state as $q^H$ and following the transition $(q^H,X\times Y, q^H) \in \Delta$ to make $q^H$ absorbing. This ensures once the robot enters a hazard state, the trajectory is no longer considered safe. With the automaton, we can define the specification as follows.

\begin{definition}[\textit{Specification}]\label{def:specification}
A robot trajectory $\{x_t\}_{t=0}^N$ satisfies a specification given by $\mathcal{A}$, if there exists a sequence of automaton states $(q_0,q_1,\ldots,q_N)$, such that $q_0 \in Q^I$, $(q_{t-1},(x_t,y_t),q_t)\in \Delta$ for $t=1,2,\ldots,N$, and $q_N \in Q^F$.
\end{definition}
We denote the resulting specification encoding automaton by $\mathcal{A}_s$. We make the assumption that the set $\left\{K^{i }\right\}_{i=1}^{\vert Q \vert}$ partitions $X\times Y$. Under this assumption $\mathcal{A}_s$ is deterministic and non-blocking \cite{wood2016automaton}, thus ensuring that the following transition kernel on $Q\times X\times Y$ to be well-defined. 
We denote the deterministic automaton transition kernel by $\tau^Q:Q\times Q\times X\times Y\to [0,1]$ and $\tau^Q(q'\vert q,x',y')=1_{\bar{q}'}(q')$, where $(q,(x',y'),\bar{q}')\in \Delta$.
Then, we combine the above mentioned states to form the overall system state $s \in S:= Q\times X\times Y$. The combined stochastic transition kernel between system states is $\tau\left(q', x^{\prime}, y^{\prime} | q, x, y, u\right)=\tau^{Q}\left(q' | q, x', y'\right) \tau^{X}\left(x^{\prime} | x, u\right) \tau^{Y}\left(y^{\prime} | y\right)$. This kernel provides the probability of transiting to a state $(q',x',y') \in S$ given the state $(q,x,y)\in S$ and control input $u\in U$. 
For path planning in hazardous environments, the automaton is a high-level state machine indicating the stage of the mission and the status of the system. The automaton for the mission in Figure \ref{fig:small_example_problem} is shown in Figure \ref{fig:Automaton_example}. 
\subsubsection{Control synthesis and dynamic programming solution}
Let the goal set be denoted by $G \subset S$ as $G:=Q^F \times X\times Y$. It follows that a trajectory $\{x_t\}_{t=0}^N$ satisfies a specification given by automaton if and only if $\exists t\in \{0,1,\ldots,N\}$ such that $s_t \in G$. 
As the environment is stochastic, we are concerned with the probability of a trajectory $\{x_t\}_{t=0}^N$ satisfying the specification given a series of control policy $\mathbf{\mu}:=\{\mu_0,\mu_1,\ldots,\mu_{N-1}\}$, where $\mu_t:S\to U$. This probability is defined as

\begin{equation}\label{satisfaction_probability}
r_{N}\left(s_{0}, \mathbf{\mu}\right) :=\mathbf{Pr}_{s_{0}}^{\mathbf{\mu}}\left(\bigcup_{t=0}^{N}(S \backslash G)^{t} \times G \times S^{N-t}\right), \nonumber
\end{equation}
which is maximized by an optimal $\mathbf{\mu}$. The event $(S \backslash G)^{t} \times G \times S^{N-t}$ indicates that the goal set $G$ is reached exactly at $t$, i.e., $s_t \in G \land s_k \notin G, \forall k < t$. 

A backward recursion can be derived to solve for $r^*_N(s_0)$ above, with $V^*_N(q,x,y)=1_G(q,x,y)$ and for $1\le t \le N$: 
\begin{align}\label{eq:backward_recursion_expand}
V_{t-1}^{*}(q,x,y)=& 1_{G}(q,x,y)+1_{S \backslash G}(q,x,y) \max _{u \in U} \sum_{q' \in Q} \sum_{x^{\prime}  \in X} \nonumber \\ 
 &\sum_{y^{\prime} \in Y} V_{t}^{*}\left(q',x^{\prime},y'\right)   \tau^{Q}\left(q' | q, x', y'\right) \tau^{X}\left(x^{\prime} | x, u\right) \tau^{Y}\left(y^{\prime} | y\right).
\end{align}
where the indicator function $1_{G}(s)=1$ if $s\in G$. 

It is known that that $r^*_N(s_0)=V^*_0(s_0)$ and the corresponding optimal control inputs obtained from the backward recursion give us the optimal control policies $\mu_t(s)$ for the satisfaction problem \cite{summers2010verification}. 
However, this backward recursion needs to be solved over the entire state space $S=Q\times X\times Y$, and the size of the stochastic environment state space $Y$ is exponential in the size of the grid space $X$. In the following section, we present our approach to mitigate this intractability issue by decoupling the environment state from the overall state space and exploit a Monte Carlo approximation for computational efficiency.

\section{Using safe transition probability to handle dynamical uncertainties}
In order to decouple the environment state from the backward recursion, we first propose the notion of safe transition probability to model the evolution of the hazard process using its initial condition only.  Then, to alleviate the computation complexity, we approximate the safe transition probability with a Monte Carlo approach. Using the above two techniques, we solve a computationally tractable version of (\ref{eq:backward_recursion_expand}) without $Y$ as part of the state space.

\subsection{Modeling uncertainties using safe transition probability}
To introduce our novel notion \textbf{safe transition probability (STP)}, we first recognize the redundancy and coupling in the system model from the following lemmas. 
\begin{lemma}[\textbf{Redundancy of the transition dynamics}]\label{lemma:redundancy}
The automaton transition kernel $\tau^Q(q'\vert q,x',y')$ can be determined by $q$, $x'$ and $[y']_{x'}$, i.e., only one element $[y']_{x'}$ of the binary matrix $y'$ is needed rather than the whole matrix. 
\end{lemma}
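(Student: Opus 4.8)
The plan is to reduce the claim to a statement about the label sets $K^i$ and then exploit their local structure in $y'$. Since the automaton kernel is deterministic, $\tau^Q(q'\mid q,x',y')=1_{\bar q'}(q')$, where $\bar q'$ is the unique state with $(q,(x',y'),\bar q')\in\Delta$. By the definition of the alphabet $\Sigma=2^A$ and of $\Delta$, the successor $\bar q'$ is fixed once we know $q$ together with the active symbol, and the active symbol is $\{K^{i^*}\}$ for the unique index $i^*$ with $(x',y')\in K^{i^*}$ (unique precisely because we assumed that $\{K^i\}$ partitions $X\times Y$). Hence it suffices to show that $i^*$ depends on $y'$ only through the single entry $[y']_{x'}$; the dependence on $q$ and on $x'$ is already permitted by the statement.

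First I would recall that $(x',y')\in K^i$ is by definition equivalent to $x'\in\gamma^i(y')$, so determining $i^*$ amounts to testing, for the fixed robot cell $x'$, the membership $x'\in\gamma^i(y')$ across the label maps. I would then classify the maps into the two kinds present in the model: the fixed target and goal labels, whose $\gamma^i$ is a constant subset of cells independent of $y'$ (so the predicate reads nothing from $y'$), and the hazard label, whose map has the form $\gamma(y')=\{x\in X:[y']_x=1\}$ from the definition of the stochastic set process, so that $x'\in\gamma(y')\iff[y']_{x'}=1$. In either case the predicate ``$x'\in\gamma^i(y')$'', evaluated at the fixed cell $x'$, inspects $y'$ at most at position $x'$ and nowhere else.

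The key step, and the one I expect to require the most care, is to argue that this locality survives the partition construction. Whenever a target or goal cell coincides with a hazardous cell, the partition must resolve the overlap (for instance by giving priority to the hazard, consistent with $q^H$ being absorbing), which modifies the relevant label to a predicate such as ``$x'$ is the target cell \emph{and} $[y']_{x'}=0$''. Crucially, even after such resolution the predicate still reads $y'$ only at position $x'$, since the only $y'$-dependent label in the model is the hazard and it is evaluated at the robot's own cell; no label set is nonlocal in $y'$. Consequently the winning index $i^*$ is a function of $(x',[y']_{x'})$ alone.

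Putting these together, I would conclude that $\bar q'$, being determined by $q$ and $\{K^{i^*}\}$ through $\Delta$, is a function of $(q,x',[y']_{x'})$, and therefore so is $\tau^Q(\cdot\mid q,x',y')=1_{\bar q'}(\cdot)$. This yields exactly the claimed redundancy: the full matrix $y'$ is never needed to evaluate the automaton transition, only its single entry $[y']_{x'}$.
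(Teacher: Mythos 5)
Your proposal is correct and follows essentially the same route as the paper's proof: use the determinism of $\tau^Q$ (i.e., $\tau^Q(q'\mid q,x',y')=1$ iff $(q,(x',y'),q')\in\Delta$) to reduce the claim to the membership test $x'\in\gamma^{q'}(y')$, and then observe that this test reads $y'$ only at position $x'$. The paper's proof is far terser and leaves implicit exactly the points you spell out (uniqueness of the active symbol via the partition assumption, and the fact that every label map — constant target/goal sets, the hazard map $\gamma(y)=\{x:[y]_x=1\}$, and their partition-resolved combinations — is local in $y'$ at the cell $x'$), so your write-up is a faithful, more careful elaboration of the same argument rather than a different one.
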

\begin{proof}
From the definition of $\tau^Q$, the value of $\tau^Q(q'\vert q,x',y')$ is 1 if $(q,(x',y'),q')\in \Delta$ and 0 otherwise. The transition from $q$ to $q'$ occurs if $x'\in \gamma^{q'}(y')$. We conclude only $[y']_{x'}$ is used in $y'$. \hfill  $\blacksquare$
\end{proof}

When computing $V_{t-1}^*(q,x,y)$ from (\ref{eq:backward_recursion_expand}), for $q=q^H$, the associated value functions are 0, i.e., $V^*_{t-1}(q^H,x,y)=0$ for $\forall t,x\ \text{and}\ y$.  
For $q\ne q^H$, $V^*_{t-1}(q,x,y)\ne 0$ is true only for the environment state $y$ such that $[y]_{x}=0$, following \textbf{Lemma 1}. This holds because in order to have non-zero value function, a state should be able to reach the goal set $G$. We conclude this with \textbf{Lemma 2}.

\begin{lemma}[\textbf{Coupling of the state elements}]\label{lemma:coupling}
Any state such that $V_t^*(q,x,y)\ne 0$ must have $q\ne q^H$ and $[y]_{x}=0$.
\end{lemma}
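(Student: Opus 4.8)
The plan is to prove Lemma~\ref{lemma:coupling} by contraposition: I will show that if $q=q^H$ or $[y]_x=1$, then the value function $V_t^*(q,x,y)=0$. Both conditions capture states from which the goal set $G$ can never be reached under the dynamics, so their satisfaction probability is zero. I would prove the two cases separately and by induction on the recursion depth, since the value functions are defined through the backward recursion \eqref{eq:backward_recursion_expand} anchored at $V_N^*(q,x,y)=1_G(q,x,y)$.

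First I would handle the case $q=q^H$. The key observation is that $q^H$ is absorbing: the transition $(q^H, X\times Y, q^H)\in\Delta$ means $\tau^Q(q'\mid q^H, x', y')=1_{q^H}(q')$ for all $x',y'$, so from $q^H$ the automaton component can only remain at $q^H$. Since the hazard state $q^H\notin Q^F$, we have $(q^H,x,y)\notin G$ for all $x,y$, hence $1_G(q^H,x,y)=0$ at the terminal step. I would then argue inductively: assuming $V_t^*(q^H,\cdot,\cdot)=0$, the recursion \eqref{eq:backward_recursion_expand} for $V_{t-1}^*(q^H,x,y)$ has vanishing terminal indicator $1_G(q^H,x,y)=0$, and the surviving sum is supported only on $q'=q^H$ by absorption, where the inductive hypothesis forces every summand $V_t^*(q^H,x',y')=0$. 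This closes the induction and gives $V_t^*(q^H,x,y)=0$ for all $t$, which is exactly the claim recorded in the paragraph preceding the lemma.

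Next I would treat the case $q\ne q^H$ but $[y]_x=1$. Here I would invoke Lemma~\ref{lemma:redundancy}: the automaton transition out of the current configuration is governed by whether $x\in\gamma^{q'}(y)$, and since $[y]_x=1$ means $x$ lies in the hazard set $\gamma^{q^H}(y)$, entering cell $x$ while $[y]_x=1$ triggers the transition into the absorbing hazard state $q^H$. More carefully, the relevant quantity is the value \emph{at} the state $(q,x,y)$ with $[y]_x=1$, which corresponds to the robot occupying a hazardous cell; by the construction of $\Delta$ and the partition assumption, such a configuration has already committed to $q^H$ on the next automaton read, so its contribution to the satisfaction probability routes only through $q^H$, where case one applies. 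I would make this precise by substituting into \eqref{eq:backward_recursion_expand} and showing every nonzero-weighted successor has automaton component $q^H$, whence $V_t^*(q,x,y)=0$.

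The main obstacle I anticipate is handling the timing convention in the case $[y]_x=1$: the value function $V_t^*(q,x,y)$ is indexed by the automaton state $q$ \emph{before} reading the current label, so I must be careful to track exactly when the hazard label $[y]_x=1$ forces the automaton into $q^H$ and argue that no path of nonzero probability avoids this transition. Making the argument airtight requires chaining Lemma~\ref{lemma:redundancy} (only $[y']_{x'}$ matters) with the absorbing structure of $q^H$ from case one, rather than treating the two cases as fully independent. Once the interplay between the label value and the automaton transition is pinned down, the induction itself is routine, driven entirely by the anchoring $V_N^*=1_G$ and the support structure of the kernels in \eqref{eq:backward_recursion_expand}.
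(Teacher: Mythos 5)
Your first case ($q=q^H$) is correct and is essentially the absorption argument the paper relies on: $q^H$ is absorbing, lies outside $Q^F$, so the terminal indicator vanishes and induction through \eqref{eq:backward_recursion_expand} propagates $V_t^*(q^H,\cdot,\cdot)=0$ backward in time.

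The gap is in your second case. You propose to show that from a state $(q,x,y)$ with $q\ne q^H$ and $[y]_x=1$, ``every nonzero-weighted successor has automaton component $q^H$,'' so that the backward recursion itself forces $V_t^*(q,x,y)=0$. This is false under the paper's kernel: in \eqref{eq:backward_recursion_expand} the automaton successor is drawn from $\tau^Q\left(q' \mid q, x', y'\right)$, which reads only the \emph{next} label $(x',y')$ --- the current entry $[y]_x$ never appears anywhere in the recursion. Concretely, the robot can move to an uncontaminated neighbor $x'$ with $[y']_{x'}=0$ (an event of positive probability), in which case $q'\ne q^H$ and the corresponding summand can be strictly positive; the recursion can therefore assign a strictly positive value to such a state, and no amount of bookkeeping inside \eqref{eq:backward_recursion_expand} will make it zero. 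The correct resolution is not a recursion argument but a forward-consistency one, which is what the paper's one-line proof does: under the paper's timing convention, $q$ is the automaton state \emph{after} reading the current label (visible in the recursion, where $1_G(q,x,y)=1_{Q^F}(q)$ is checked on $q$ directly and transitions read $(x',y')$), so any state actually generated by the dynamics with $[y]_x=1$ must already have $q=q^H$. The combination $q\ne q^H$, $[y]_x=1$ is dynamically unreachable, and the lemma follows by chaining ``$[y]_x=1 \Rightarrow q=q^H$'' with your (correct) absorption argument. Your proposed fix --- reinterpreting $q$ as the state \emph{before} reading the current label --- contradicts the paper's recursion rather than repairing the argument; what is needed is to read the lemma as a statement about states consistent with the transition dynamics, which is how the paper's proof implicitly treats it.
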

\begin{proof}
If $[y]_{x}=1$, we will have $q=q^H$ according to the transition dynamics. Then, $V_t^*(q,x,y)= 0$ because this state can never reach $G$ from $(q,x,y)$ at $t$. We conclude the lemma by the contraposition of this statement. \hfill $\blacksquare$
\end{proof}

In order to decouple the environment state $y$ while solving (\ref{eq:backward_recursion_expand}) at $t-1$, we need to account for the coupling between $q$ and $[y]_x$ stated in \textbf{Lemma 2}. We define the \textbf{safe transition probability (STP)} of the environment state being $y_t=y'$, with the knowledge of $y_0$, and conditioned on $q$ and $x$ as:

\begin{equation}\label{eq:markov_fire_new}
\tau_t^Y(y';q,x)=
\begin{cases}
\sum\limits_{\substack{y_{t-1} \in Y\\ [y_{t-1}]_{x} =0}} \tau^{Y}\left(y' | y_{t-1}\right) \sum\limits_{y_{t-2} \in Y} \cdots \sum\limits_{y_{1} \in Y}   \prod_{k=1}^{t-1} \tau^{Y}\left(y_{k} | y_{k-1}\right), & q\ne q^H \\
0, & q=q^H
\end{cases}
\end{equation} 
for $t\ge 1$ and $\tau_0^Y(y')=1_{y_{0}}(y')$. 

Intuitively, STP is another way of expressing probability of the environment state transiting to $y'$ at $t$, which was expressed as $\tau^Y(y'|y)$ in (\ref{eq:backward_recursion_expand}). As we will decouple the environment state, we will not be able to access $y$, but use $q$ and $x$ to infer this probability from STP.

\subsection{Incorporating STP in the backward recursion framework}
Our first result is that we can remove the environment state $y$ from the backward recursion state space using STP.  
\begin{theorem}[\textbf{Independence of hazard process and optimal solution}]
By modeling the environment state evolution using STP, the value function and the optimal control policy obtained are independent of the parameter space $Y$. 
\end{theorem}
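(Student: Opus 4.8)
The plan is to prove the statement by backward induction on the time index $t$, running from $t=N$ down to $t=0$. First I would write down the recursion that results from replacing the environment kernel $\tau^Y(y'\vert y)$ in (\ref{eq:backward_recursion_expand}) by the safe transition probability $\tau_t^Y(y';q,x)$, namely
\begin{equation*}
V_{t-1}(q,x,y)=1_{G}(q,x,y)+1_{S\backslash G}(q,x,y)\max_{u\in U}\sum_{q'\in Q}\sum_{x'\in X}\sum_{y'\in Y}V_{t}(q',x',y')\,\tau^{Q}(q'\vert q,x',y')\,\tau^{X}(x'\vert x,u)\,\tau_t^Y(y';q,x),
\end{equation*}
keeping $y$ as a formal argument of $V_{t-1}$ for now. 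The single change from (\ref{eq:backward_recursion_expand}) is that the last factor no longer references the current state $y$. The claim to establish is then that every $V_t$ produced by this recursion, together with its maximizer, is constant in $y$, so that $y$ can be dropped from the state space entirely.

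For the base case $t=N$ I would use $V_N(q,x,y)=1_G(q,x,y)$. Because $G=Q^F\times X\times Y$, the indicator $1_G(q,x,y)$ equals $1$ precisely when $q\in Q^F$ and is $0$ otherwise, independently of $x$ and $y$; hence $V_N$ is a function of $q$ alone. This same fact shows that both prefactors $1_G(q,x,y)$ and $1_{S\backslash G}(q,x,y)$ in the recursion depend only on $q$, so they can never introduce a dependence on $y$ at any later stage. For the inductive step I would assume $V_t(q',x',y')$ is already independent of $y'$ and write it as $V_t(q',x')$. Inspecting the summand, the factor $\tau^X(x'\vert x,u)$ carries no environment argument, and $\tau_t^Y(y';q,x)$ depends only on $q$, $x$, $t$, $y'$, and the fixed initial condition $y_0$ embedded in the definition of the STP, with no reference to the current $y$. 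The only remaining environment-bearing factor is $\tau^Q(q'\vert q,x',y')$, and by \textbf{Lemma \ref{lemma:redundancy}} this depends on $y'$ solely through the scalar $[y']_{x'}$. Consequently the inner sum over $y'\in Y$ collapses to the two marginal weights $\sum_{y':[y']_{x'}=b}\tau_t^Y(y';q,x)$ for $b\in\{0,1\}$, and the entire summand is a function of $(q,x,q',x')$ only. Carrying out the remaining sums over $q'$ and $x'$ within the maximization over $u$ then yields $V_{t-1}(q,x,y)$ with no residual dependence on $y$, and the maximizing input $\mu_{t-1}$ inherits the same independence. This closes the induction and proves both claims.

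I expect the independence bookkeeping above to be routine; the part that demands care is justifying that the STP substitution is the \emph{legitimate} replacement for $\tau^Y(y'\vert y)$ rather than an arbitrary $y$-free surrogate. Here \textbf{Lemma \ref{lemma:coupling}} is the key: since any state with nonzero value has $[y]_x=0$ and $q\ne q^H$, the conditioning on $[y_{t-1}]_x=0$ built into $\tau_t^Y(y';q,x)$ is exactly what encodes the safety coupling that (\ref{eq:backward_recursion_expand}) enforced through the absorbing hazard state $q^H$. I would verify that this conditioning lines up correctly so that the reduced recursion is well posed; establishing that it also reproduces the correct satisfaction probability, however, is a separate consistency result beyond the independence claim asserted in this theorem.
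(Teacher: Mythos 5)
Your proof is correct and follows essentially the same route as the paper: substitute the STP $\tau_t^Y(y';q,x)$ for $\tau^Y(y'\vert y)$, observe that the indicators and the remaining kernels never reference the current $y$, and run a backward induction from $t=N$ to conclude that each $\tilde{V}_t^*$ and its maximizing policy depend on $(q,x)$ only. Your appeal to Lemma~\ref{lemma:redundancy} is harmless but not actually needed for the independence claim (the sum over $y'$ eliminates all environment dependence regardless), and your closing caveat---that independence is distinct from the consistency of the STP substitution---matches the paper's own treatment, which defers that issue to Theorem~\ref{thm:correctness}.
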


\begin{proof}
We omit $x,y$ in $V^*_N(s)$, $1_{G}(s)$ and $1_{S\backslash G}(s)$ as they do not depend on $x\ \text{and}\ y$. 
We obtain the approximated optimal value function $\tilde{V}_{N-1}^{*}(q,x,y)$ for any state $s=(q,x,y)$ using STP as
\begin{align}
\tilde{V}_{N-1}^{*}(q,x,y)=&1_{G}(q)+1_{S \backslash G}(q) \max _{u \in U} \sum_{q' \in Q} \sum_{x^{\prime} \in X} \sum_{y^{\prime} \in Y}  \nonumber \\ & V_{N}^{*}\left(q'\right) \tau^{Q}\left(q' | q, x', y'\right) \tau^{X}\left(x^{\prime} | x, u\right) \tau_{N}^{Y}\left(y';q,x\right) \nonumber \\
=&1_{G}(q)+1_{S \backslash G}(q) \max _{u \in U} \sum_{q' \in Q} \sum_{x^{\prime} \in X}  V_{N}^{*}\left(q'\right) \tau^{X}\left(x^{\prime} | x, u\right) \nonumber \\ &  \sum_{y^{\prime} \in Y} \tau^{Q}\left(q' | q, x', y'\right) \tau_{N}^{Y}\left(y';q,x\right). \nonumber
\end{align}
In the first equality, we replace the environment dynamics $\tau^Y(y'\vert y)$ with STP at $t=N$, as STP also represents the probability of $y_N=y'$ but with the knowledge of $y_0$ only. 
From the second equality, it follows that the value function $\tilde{V}_{N-1}^*(\cdot)$ is only a function of $q$ and $x$. We use induction to conclude that $\tilde{V}^*_t(\cdot)$ is independent of the parameter $y\in Y$ for $t=N-1,\ldots,0$ by replacing $V_{t+1}^*(q',x',y')$ by $\tilde{V}_{t+1}^*(q',x')$ in the backward recursion.  \hfill $\blacksquare$
\end{proof}
We use the notation $\tilde{V}_t^*(q,x)$ in the simplified dynamic programming with STP. Note that $\tilde{V}_t^*(q,x)$ approximates $V_t(q,x,y)$ by only considering two out of three arguments, because we no longer have access to the environment state which is eliminated in the summation over $y'$. We rewrite (\ref{eq:backward_recursion_expand}) using STP as
\begin{align}\label{eq:backward_recursion_exact_separate_2}
\tilde{V}_{t-1}^{*}(q,x)= &1_{G}(q)+1_{S \backslash G}(q) \max _{u \in U} \sum_{q' \in Q} \sum_{x^{\prime} \in X}  \tilde{V}_{t}^{*}\left(q',x^{\prime}\right)\nonumber \\ & \tau^{X}\left(x^{\prime} | x, u\right) \sum_{y^{\prime} \in Y}\tau^{Q}\left(q' | q, x', y'\right)  \tau_{t}^{Y}\left(y';q,x\right). 
\end{align}
Note that when we compute $V_{t-1}^*(q,x,y)$ from (\ref{eq:backward_recursion_expand}), those states associated with $y$ such that $[y]_x=1$ have zero value functions (\textbf{Lemma 2}). Thus, $y'$ such that $[y]_x=1$ in the summation in (\ref{eq:backward_recursion_expand}) are effectively excluded for the computation of $V_{t-1}^*(q,x,y)$. When computing $\tilde{V}_{t-1}^{*}(q,x)$ in (\ref{eq:backward_recursion_exact_separate_2}), only those $y'$ that satisfy $[y]_x=0$ should be considered in the summation over $y'$ for the same reason. Also, $\tilde{V}_{t-1}^{*}(q^H,x)=0$ should be true for $\forall t$ and $x$, following the original full state solution.  
We use the following theorem that the solution of (\ref{eq:backward_recursion_exact_separate_2}) also complies with this.

\begin{theorem}\label{thm:correctness}
By using STP, the solution $\tilde{V}_{t-1}^{*}(q,x)$ obtained from (\ref{eq:backward_recursion_exact_separate_2}) satisfies: \textbf{(a)} For $q\ne q^H$, only $y'$ such that $[y]_{x}=0$ are included in the summation in backward recursion (\ref{eq:backward_recursion_exact_separate_2}). \textbf{(b)} For $q=q^H$, $\tilde{V}_{t-1}^{*}(q^H,x)=0$ for $\forall t$ and $x$.
\end{theorem}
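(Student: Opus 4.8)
The plan is to prove the two claims of Theorem~\ref{thm:correctness} by directly analyzing the structure of the safe transition probability $\tau_t^Y(y';q,x)$ defined in (\ref{eq:markov_fire_new}), showing that its definition already encodes exactly the coupling constraints established in \textbf{Lemma 2}. The key observation is that the summation over $y'$ in (\ref{eq:backward_recursion_exact_separate_2}) is weighted by $\tau_t^Y(y';q,x)$, and this STP weight is precisely where the conditioning on $q$ and $x$ enters the computation. So the strategy is to read off both parts from how STP is constructed, rather than to re-derive the value-function recursion from scratch.

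For part \textbf{(b)}, I would argue the simplest case first. When $q=q^H$, the second branch of the definition (\ref{eq:markov_fire_new}) gives $\tau_t^Y(y';q^H,x)=0$ for every $y'$. Substituting this into (\ref{eq:backward_recursion_exact_separate_2}), the inner sum $\sum_{y'\in Y}\tau^Q(q'\mid q,x',y')\,\tau_t^Y(y';q^H,x)$ vanishes identically, so the entire $\max_{u}$ term is zero. It then remains only to note that the leading term $1_G(q^H)=0$: since $q^H$ is the absorbing hazard state and is not a final/goal state, $q^H\notin Q^F$, hence $q^H$ is not in the goal projection and the indicator evaluates to zero. Combining these gives $\tilde V_{t-1}^*(q^H,x)=0$ for all $t$ and $x$, matching the full-state solution. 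This part should be essentially immediate from the definition.

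For part \textbf{(a)}, when $q\ne q^H$, the first branch of (\ref{eq:markov_fire_new}) restricts the outermost summation to those $y_{t-1}$ with $[y_{t-1}]_x=0$. I would trace how this restriction propagates to the terminal environment state $y'$ that actually appears as the argument of STP: the marginalization over intermediate states $y_1,\ldots,y_{t-1}$ under the hazard kernel, with the $[y_{t-1}]_x=0$ constraint on the last pre-terminal step, means that STP assigns nonzero weight only to transitions consistent with the robot having remained safe (i.e.\ not in the hazard) at its current cell up to time $t$. I would then invoke \textbf{Lemma 1} to note that $\tau^Q(q'\mid q,x',y')$ depends on $y'$ only through $[y']_{x'}$, so the effective conditioning is exactly on the relevant cell occupancy, and show that any $y'$ inconsistent with $[y]_x=0$ contributes zero to the weighted sum.

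The main obstacle I anticipate is making part \textbf{(a)} precise, because the constraint $[y_{t-1}]_x=0$ in the definition is imposed on the \emph{pre-terminal} environment state $y_{t-1}$, whereas the claim concerns the \emph{terminal} state $y'$ appearing in the recursion. The statement as phrased (``$[y]_x=0$'') is slightly informal about which time index the constraint refers to, so the careful work is to align the STP definition with the intended interpretation from \textbf{Lemma 2}, namely that only safe-at-$x$ configurations carry nonzero value, and to confirm that the STP construction enforces exactly this. I expect this to require an inductive argument mirroring the induction used in Theorem~1, carrying the invariant that at each step the value function is supported only on states with $[y]_x=0$, and that the one-step STP weight preserves this support structure.
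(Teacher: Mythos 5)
Your proposal is correct and follows essentially the same route as the paper: both parts are read directly off the two branches of the STP definition in (\ref{eq:markov_fire_new}), with part \textbf{(b)} following from $\tau_t^Y(y';q^H,x)=0$ and part \textbf{(a)} from the restriction $[y_{t-1}]_x=0$ in the summation. The inductive argument you anticipate for part \textbf{(a)} is not needed --- once you fix the time-index convention (the constraint $[y]_x=0$ in the statement refers to the pre-terminal environment state, exactly the $y_{t-1}$ that STP restricts), the claim is definitional, which is precisely how the paper's two-sentence proof treats it; your added observation that $1_G(q^H)=0$ is a detail the paper leaves implicit.
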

\begin{proof}
While computing $\tilde{V}_{t-1}^{*}(q,x)$ when $q\ne q^H$, all the $y'$ in the summation satisfies $[y]_{x}=0$ by the definition of STP. For the case $q=q^H$, $\tilde{V}_{t-1}^{*}(q^H,x)=0$ is true as $\tau_{t}^{Y}\left(y';q^H,x\right)=0$ according to the definition of STP.
\hfill $\blacksquare$
\end{proof}

Based on the result above, we could perform backward recursion within a smaller state space. However, the summation $\sum_{y' \in Y} \tau^{Q}\left(q' | q, x', y'\right) \tau_{t}^{Y}\left(y';q,x\right)$ is computationally intractable. Our next contribution is to devise a tractable Monte Carlo method to compute the evolution of the environment state.

\subsection{Approximating the uncertainty evolution with Monte Carlo approach}
To simplify (\ref{eq:backward_recursion_exact_separate_2}), we define the compact transition kernel 
\begin{align}\label{eq:approximated_automaton_transition_kernel}
\tau^Q_t(q'\vert q,x',x) &:= \sum_{y' \in Y} \tau^{Q}\left(q' | q, x', y'\right) \tau_{t}^{Y}\left(y';q,x\right),
\end{align}
to indicate the probability of the simplified automaton state transiting to $q'$ from $q$ at $x'$ using STP. 
While solving (\ref{eq:backward_recursion_exact_separate_2}), we are only interested in $\tau^Q_t(q'\vert q,x',x)$ rather than the value of $\tau_{t}^{Y}\left(y';q,x\right)$ for each $y'$. 
Thus, we rewrite (\ref{eq:backward_recursion_exact_separate_2}) using the compact transition kernel as
\begin{equation}\label{eq:backward_recursion_my}
\tilde{V}_{t-1}^{*}(q,x) = 1_{G}(q) +1_{S \backslash G}(q) \max _{u \in U} \nonumber \sum_{q' \in Q} \sum_{x^{\prime} \in X}  \tilde{V}_{t}^{*}\left(q',x^{\prime}\right) \tau^{X}\left(x^{\prime} | x, u\right) \tau^{Q}_t\left(q' | q, x',x\right). 
\end{equation} 
Now our objective is to find a computationally tractable way to obtain $\tau^{Q}_t\left(q' | q, x',x\right)$. 
First, we obtain the probability of a cell being inside the hazardous area using STP. We denote the probability of $x'\in \gamma(y_t)$ for all neighbor pairs $(x,x')$ by the following quantity:

\begin{equation}\label{eq:new_covering_function_fire}
p_t(x';\gamma, q)=
\begin{cases}
\sum\limits_{y_t \in Y} 1_{\gamma(y_t)}(x') \tau_{t }^{Y}(y_t \vert [y_{t-1}]_{x}=0), & q\ne q^H \\
1, & q=q^H.
\end{cases}
\end{equation}

Then, the compact kernel $\tau_t^Q$ can be obtained as follows. First, when $q'=q^H$, the above quantity (\ref{eq:new_covering_function_fire}) is also the probability of $x'\in \gamma^{q^H}(y_t)$. Thus, $\tau^Q_t(q^H\vert q,x',x) = p_{t}(x' ; \gamma,q)$. Second, for the state where $q'\ne q^H$, the probability of $x'\in \gamma^{q'}(y_t)$ is computed as the product of the probability of $x'$ not being contaminated and the probability of the robot at a certain subset of $X_f$ determined by $q$ and $q'$. For example, in Figure \ref{fig:Automaton_example}, $\tau^Q_t(q^1\vert q^1,x',x)=(1-p_{t}(x' ; \gamma,q))(1-\mathbf{1}_{B}(x'))$. 

The quantity (\ref{eq:new_covering_function_fire}) is still costly to compute for $q\ne q^H$, but its approximation $\tilde{p}_{t}(x' ; \gamma,q)$ can be obtained by a Monte Carlo method as shown in algorithm \ref{alg:cover_2} below. 
\begin{algorithm}
\caption{\textbf{Monte Carlo approximation of $p_t(x';\gamma, q)$}}\label{alg:cover_2}
\begin{algorithmic}[1]
\State \textbf{Input:} robot state pair $(x,x')$, where $x' \in N(x)$; time $t$; total episode number $E$, hazard evolution episodes $\{y_0,y^e_1,\ldots,y^e_N\}$, where $0\le e \le E$. 
\State \textbf{Initialize:} loop counter\ $n=0$; \text{contaminated counter at}\ $t-1$, $c_{t-1}=0$; \text{contaminated counter at}\ $t$, $c_t=0$
\While{$n\le E$ }
  \State obtain $y^n_t$ and a $y^n_{t-1}$ from episode $\{y_0,y^n_1,\ldots,y^n_N\}$
  \If{$[y^n_{t-1}]_{x}=0$}
    \State $c_{t-1}=c_{t-1}+1$ 
      \If{$[y^n_t]_{x'}=1$}
        \State $c_t=c_t+1$
      \EndIf
  \EndIf\\
  $n=n+1$
\EndWhile
\State \textbf{Output:} $\tilde{p}_{t}(x' ; \gamma,q):=c_t/c_{t-1}$ for $q\ne q^H$.
\end{algorithmic}
\end{algorithm}

 Above, $\tilde{p}_{t}(x' ; \gamma,q)$ is obtained for all neighbor pairs $(x,x')\in X_f\times X_f$ at all $t\le N$. Then, we obtain the approximated (\ref{eq:approximated_automaton_transition_kernel}) as $\tilde{\tau}^Q_t(q'\vert q,x',x)$, where the $2^{\vert X\vert}$ summation due to the size of $Y$ is replaced with algorithm \ref{alg:cover_2}.

\subsection{Solving the backward recursion tractably}
By recursively solving (\ref{eq:backward_recursion_my}) with $\tilde{\tau}^{Q}_t\left(q' | q, x',x\right)$, we approximately obtain the value functions $\tilde{V}_{t-1}^{*}(q,x)$ for all the states $(q,x)$ and for $t=N,\ldots,0$.

To summarize the approaches introduced so far, we firstly simplified (\ref{eq:backward_recursion_expand}) to (\ref{eq:backward_recursion_exact_separate_2}) by decoupling the hazard process $\tau^Y_t(y';q,x)$ using STP. Then, we approximated the combined kernel $\tau^Q_t(q'\vert q,x',x)$ using $\tilde{p}_{t}(x' ; \gamma,q)$ from Monte Carlo simulations to solve (\ref{eq:backward_recursion_exact_separate_2})  in a tractable way as (\ref{eq:backward_recursion_my}). 
Note that we lose accuracy in the proposed approach to alleviate the computation complexity during two processes. One is the decoupling of the environment state from the backward recursion. Although we use STP to capture the coupling relation as in Theorem \ref{thm:correctness}, we still cannot reach the exact result as solving (\ref{eq:backward_recursion_expand}). The other one is the Monte Carlo approximation of the hazard process. This inaccuracy can be mitigated by using a large number of Monte Carlo samples in algorithm \ref{alg:cover_2} as it is an offline computation. 

\section{Case study}

We present case studies of planning in an uncertain environment to compare the performance of our method with the most capable existing approaches to address dynamical uncertainties as far as we are concerned. We show that our method provides trajectories with a higher probability of safety than D* Lite and the method in \cite{wood2016automaton} for point-to-point and complex mission planning, respectively. 

We use an evolving fire based on \cite{soubaras2008risk} as an example of the dynamical uncertainties in the following experiments. Starting from an initial location, the fire spreads to the neighboring grid cells probabilistically. Particularly, the transition kernel of the set process $\tau^Y:Y\times Y\to[0,1]$ is
\begin{equation}\label{eq:hazard_transition_kernel}
\tau^{Y}\left(y_{t+1} | y_t\right)=\prod_{x^1=0}^{m-1} \prod_{x^2=0}^{n-1} \mathbf{Pr}_{f}\Big\{\left[y_{t+1}\right]_{x} | y_t\Big\},
\end{equation}

\begin{equation}\label{eq:hazard_transition}
\small
\mathbf{Pr}_{f}\Big\{[y_{t+1}]_{x}\vert y_t\Big\}
:=
\begin{cases} 
\displaystyle
p_{n}(x, y_t) & \text{if}\ [y_{t+1}]_{x}=0\wedge[y_t]_{x}=0\\ 
1-p_{n}(x, y_t) & \text{if}\ [y_{t+1}]_{x}=1\wedge[y_t]_{x}=0\\ 
0 & \text{if}\ [y_{t+1}]_{x}=0\wedge[y_t]_{x}=1\\
1 & \text{if}\ [y_{t+1}]_{x}=1\wedge[y_t]_{x}=1, 
\end{cases}
\end{equation}
where $p_n(x,y_t)$ denotes the probability of a safe grid cell $x$ remaining safe in the next time step, and is defined as: 
\begin{equation}
\label{eq:hazard_safeProb}
p_n(x,y_t) = \Big( 1 - p_f(x) \Big)^{N_f(x,y_t)}\Big( 1 - \frac{p_f(x)}{\sqrt{2}} \Big)^{D_f(x,y_t)}.
\end{equation}
For each grid $x$, $p_f(x)$ is a constant describing the evolving speed of the hazard in different types of the space.
Parameters $N_f$ and $D_f$ in (\ref{eq:hazard_safeProb}) are the numbers of direct and diagonal neighbors inside the hazardous areas. For diagonal neighbors, we scale the parameter $p_f(x)$ by $\frac{1}{\sqrt{2}}$, which indicates the flexibility to set different hazard evolving speed based on the distance of the neighboring cell. 
The hazard model described by (\ref{eq:hazard_transition_kernel}), (\ref{eq:hazard_transition}) and (\ref{eq:hazard_safeProb}) can be intuitively explained as follows. 
From (\ref{eq:hazard_transition}), a contaminated cell remains contaminated in all following time steps. The probability of an uncontaminated cell getting contaminated, i.e., affected by fire, in the next time step is independently affected by the environment states of its neighbors at the current time step, as shown in (\ref{eq:hazard_safeProb}). Using this process, we obtain $[y_{t+1}]_x$, for $\forall x$. Then, the probability of any environment state $y_{t+1}$ can be computed by multiplying the probability of each grid cell $x$ holding the value of $[y_{t+1}]_x$ given $y_t$ as in (\ref{eq:hazard_transition_kernel}).

\subsection{Comparison to D* Lite in point-to-point planning}
We set up a point-to-point planning case by simplifying the complex mission we use in Figure \ref{fig:small_example_problem}. We only kept one target $A$ and three initial hazards $H_1$, $H_2$ and $H_3$. The objective is to plan a path from the initial location $I$ to target $A$ while maximizing the probability of finishing this mission safely. The setup is shown in Figure \ref{fig:ptp}(a). As discussed in the introduction, D* Lite is a reactive replanning algorithm to find the shortest path in dynamical environments. The initially planned path is adjusted when an obstacle is detected within the robot's visibility during the execution of the path. We set the visibility of the robot to be its neighbor grids within $2$ steps. For our method, we used the offline closed-loop policy obtained from (\ref{eq:backward_recursion_my}) and did not adjust it during execution. Note we obtain optimal control inputs for any given state at any time accounting for the dynamical uncertainties, while D* Lite only considers a static environment. In Figure \ref{fig:ptp}(a), the initial D* Lite path (the green line) directly went to the target as it does not consider the dynamical model of the environment, while the path planned by our method (the blue line) took a detour. We tested the two methods by simulating the hazard spreading process 1000 times to compare their empirical probabilities of safely reaching the target, as shown in the table.

\begin{center}
 \begin{tabular}{||c c c||} 
 \hline
 Method &D* Lite&  This work  \\ [0.5ex] 
 \hline
 Success rate & $30.0\%$ & $38.7\%$ \\ 
 \hline
\end{tabular}
\end{center}

For D* Lite, the robot sometimes backtracked its path when blocked by hazard during the execution. One example is shown in Figure \ref{fig:ptp}(b) and Figure \ref{fig:ptp}(c), where the path already traveled is solid, and the path planned but not yet executed is dotted. At $t=18$, the robot was at $(12,10)$ and planned to go to $(13,10)$ at the next time step as in Figure \ref{fig:ptp}(b). But at $t=19$, the robot observed that $(13,10)$ was blocked by hazard and it had to go back to $(12,9)$ and plan a new path as in Figure \ref{fig:ptp}(c). At $t=19$, the robot had already reached $(13,14)$ using our method as in Figure \ref{fig:ptp}(d). Note that this might happen several times, like in the example, and the robot using D* Lite will have a much longer detour. Then, the hazard could contaminate more spaces and decrease the probability of finishing the task safely. 
\begin{figure}
  \begin{subfigure}[t]{.48\textwidth}
    \centering
    \includegraphics[width=\linewidth]{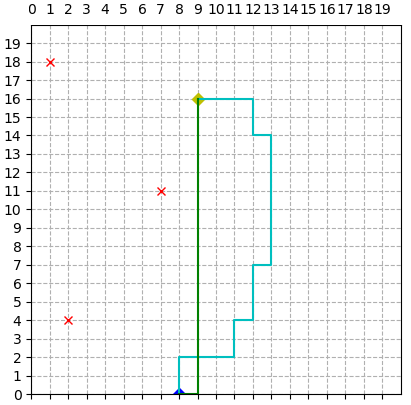}
    \caption{Initial path by both methods}
  \end{subfigure}
  \hfill
  \begin{subfigure}[t]{.48\textwidth}
    \centering
    \includegraphics[width=\linewidth]{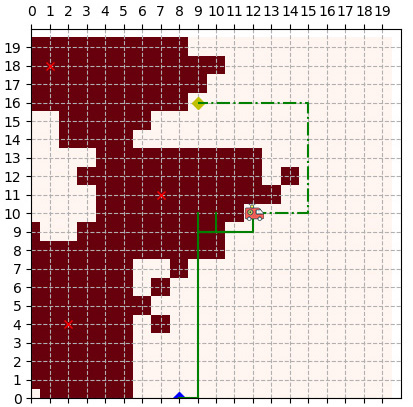}
    \caption{D* Lite at $t=18$}
  \end{subfigure}

  \medskip

  \begin{subfigure}[t]{.48\textwidth}
    \centering
    \includegraphics[width=\linewidth]{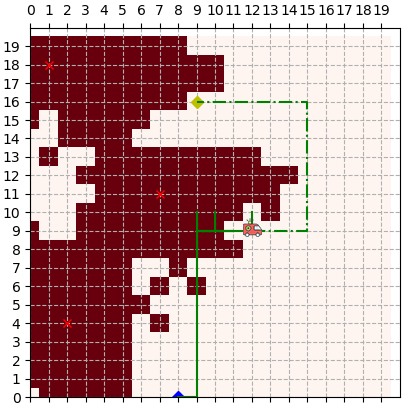}
    \caption{D* Lite at $t=19$}
  \end{subfigure}
  \hfill
  \begin{subfigure}[t]{.48\textwidth}
    \centering
    \includegraphics[width=\linewidth]{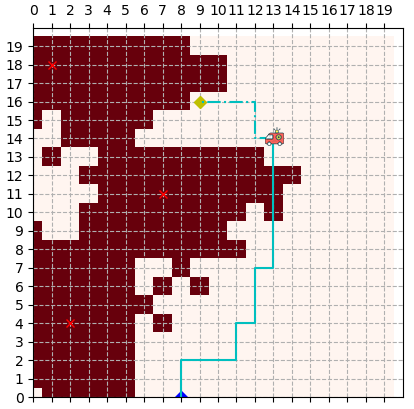}
    \caption{Our method at $t=19$}
  \end{subfigure}
  \caption{The paths from D* Lite and our method are shown in green lines and blue lines, respectively. Path already traveled is solid and not yet executed is dotted. Because D* Lite backtracked and replanned several times as in (b) and (c), our method yielded better path than D* Lite under dynamical uncertainties as in (d).}
  \label{fig:ptp}
\end{figure}

\subsection{Comparison to past works in complex mission planning}
We compare our method to \cite{wood2016automaton}, where a similar DP is developed but the coupling between state elements is ignored.
We show that we obtain a better path using our method in terms of the probability of satisfying the specification safely than the method in \cite{wood2016automaton}. In this example, we used the specification of reaching the target $A$ first and exiting at $F$, while avoiding the hazard $H$. Some obstacles were added to increase the complexity of the task. The time consumption to solve the DP for our method and the method in \cite{wood2016automaton} were both approximated 150 seconds. The paths obtained offline using both methods are shown in Figure \ref{fig:exp}, where our method chose a longer but safer path. Note that red color indicates that the grid cell was contaminated at a certain time, where darker red means the cell was contaminated earlier. We show the empirical success rates for both methods in the following table. 

\begin{figure}
  \begin{subfigure}[t]{.48\textwidth}
    \centering
    \includegraphics[width=\linewidth]{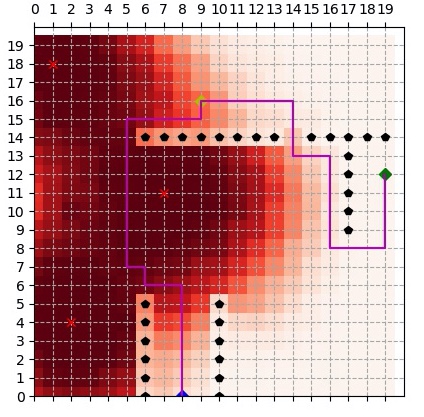}
  \end{subfigure}
  \hfill
  \begin{subfigure}[t]{.48\textwidth}
    \centering
    \includegraphics[width=\linewidth]{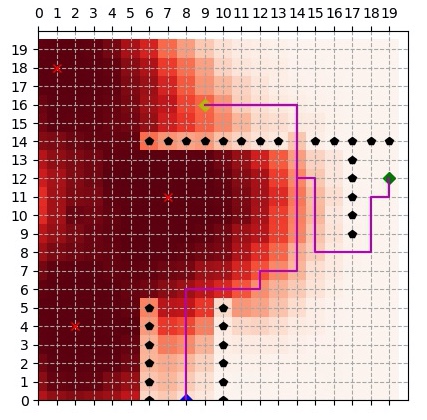}
  \end{subfigure}
  \caption{Optimal path from \cite{wood2016automaton} (left) and our method (right). In this case our method planed a longer but safer path, which outperformed the method in \cite{wood2016automaton}.}
  \label{fig:exp}
\end{figure}

\begin{center}
 \begin{tabular}{||c c c||} 
 \hline
 Method &DP without STP in \cite{wood2016automaton}&  This work   \\ [0.5ex] 
 \hline
 Success rate & $0.003\%$ & $0.667\%$\\ 
 \hline
\end{tabular}
\end{center}

Note that for the case studies, due to the size of the environment state space, we could not obtain a ground truth for the planning problem, and thus had to use the approximated approaches. The algorithms were coded in Python 3.6 on a computer with 2.3 GHz i5 quadcore processor and 16 GB memory. 

\section{Conclusions}
We developed a framework of safe planning for complex missions in uncertain dynamical environments. To address computational tractability, we devised a method to decouple the computation of the environment's dynamical evolution from that of the robot trajectory to solve the planning problem, based on a novel notion of safe transition probability. Also, we mitigated the intractability issue by designing a Monte Carlo approximation for the environment's uncertainty propagation. We showed in case studies that our method outperforms existing approaches for planning under dynamical uncertainties. 

Future directions include combining the idea of replanning approach employed in such as D* Lite in our framework. This enables incorporating online observations and making corresponding online adjustments to improve the safety probability further.
Furthermore, we plan to run the methodology on realistic robotics testbeds.

\section*{Acknowledgement}
The authors thank Dr. David Adjiashvili for helpful discussions. 

\bibliographystyle{plain}
\bibliography{references}

\end{document}